\renewcommand*{\@fnsymbol}[1]{\ensuremath{\ifcase#1\or \dagger\or \ddagger\or
   \mathsection\or \mathparagraph\or \|\or **\or \dagger\dagger
   \or \ddagger\ddagger \else\@ctrerr\fi}}
\pgfplotsset{compat=1.14, legend style={font=\tiny},label style={font=\small}, tick label style={font=\small},}
\theoremstyle{plain}
\newtheorem{theorem}{Theorem}[section]
\newtheorem{proposition}[theorem]{Proposition}
\theoremstyle{definition}
\newtheorem{definition}[theorem]{Definition}
\theoremstyle{remark}
\definecolor{brickred}{RGB}{170, 74, 68}
\definecolor{deepsaffron}{rgb}{1.0, 0.6, 0.2}
\definecolor{dodgerblue}{RGB}{30, 144, 255}
\definecolor{cadmiumgreen}{rgb}{0.0, 0.42, 0.24}
\definecolor{calpolypomonagreen}{rgb}{0.12, 0.3, 0.17}
\definecolor{darkcerulean}{rgb}{0.03, 0.27, 0.49}
\definecolor{dartmouthgreen}{rgb}{0.05, 0.5, 0.06}
\definecolor{palatinateblue}{rgb}{0.15, 0.23, 0.89}
\definecolor{royalazure}{rgb}{0.0, 0.22, 0.66}
\definecolor{royalblue}{rgb}{0.25, 0.41, 0.88}
\definecolor{mint}{rgb}{0.24, 0.71, 0.54}
\definecolor{mintgreen}{rgb}{0.6, 1.0, 0.6}
\definecolor{jade}{rgb}{0.0, 0.66, 0.42}
\definecolor{junglegreen}{rgb}{0.16, 0.67, 0.53}
\newcommand{\simclr}{\textsc{SimCLR}}
\newcommand{\barlow}{\textsc{BarlowTwins}}
\newcommand{\vicreg}{\textsc{VICReg}}
\newcommand{\simsiam}{\textsc{SimSiam}}
\newcommand{\rqmin}{\textsc{RQmin}}
\newcommand*{\F}{\mathcal{F}}
\newcommand*{\M}{\mathcal{M}}
\newcommand*{\mL}{\mathbf{L}}
\newcommand*{\Lrw}{\mL_{rw}}
\DeclareMathOperator{\tr}{Tr}
\newcommand*{\zi}{\mathbf{z}_i}
\newcommand*{\vect}{\texttt{vec}}
\newcommand*{\mat}{\texttt{mat}}
\newcommand*{\vb}{\mathbf{b}}
\newcommand*{\ve}{\mathbf{e}}
\newcommand*{\vq}{\mathbf{q}}
\newcommand*{\vu}{\mathbf{u}}
\newcommand*{\vx}{\mathbf{x}}
\newcommand*{\vy}{\mathbf{y}}
\newcommand*{\vz}{\mathbf{z}}
\newcommand*{\mA}{\mathbf{A}}
\newcommand*{\mB}{\mathbf{B}}
\newcommand*{\mD}{\mathbf{D}}
\newcommand*{\mH}{\mathbf{H}}
\newcommand*{\mI}{\mathbf{I}}
\newcommand*{\mM}{\mathbf{M}}
\newcommand*{\mU}{\mathbf{U}}
\newcommand*{\mV}{\mathbf{V}}
\newcommand*{\mW}{\mathbf{W}}
\newcommand*{\mX}{\mathbf{X}}
\newcommand*{\mY}{\mathbf{Y}}
\newcommand*{\mZ}{\mathbf{Z}}
\newcommand*{\cS}{\mathcal{S}}
\newcommand*{\sR}{\mathbb{R}}
\pgfplotsset{
  /pgfplots/line legend with two lines/.style 2 args={
    legend image code/.code={
      \draw[mark phase=2, mark repeat=2, #1] plot coordinates {(0cm, -0.1cm) (0.25cm, -0.1cm) (0.5cm, -0.1cm)};
      \draw[mark phase=2, mark repeat=2, #2] plot coordinates {(0cm, 0.1cm) (0.25cm, 0.1cm) (0.5cm, 0.1cm)};
    }
  }
}
\definecolor{codegreen}{rgb}{0,0.6,0}
\definecolor{codegray}{rgb}{0.5,0.5,0.5}
\definecolor{codepurple}{rgb}{0.58,0,0.82}
\definecolor{backcolour}{rgb}{0.95,0.95,0.92}
\lstdefinestyle{mystyle}{
    backgroundcolor=\color{white},   
    commentstyle=\color{cycle6},
    keywordstyle=\color{cycle3},
    stringstyle=\color{cycle1},
    basicstyle=\ttfamily\footnotesize,
    breakatwhitespace=false,         
    breaklines=false,                 
    captionpos=b,                    
    keepspaces=true,                 
    numbers=none,                    
    numbersep=5pt,                  
    showspaces=false,                
    showstringspaces=false,
    showtabs=false,                  
    tabsize=2
}
\mathchardef\mhyphen="2D
\title{Neural Harmonics: Bridging Spectral Embedding and Matrix Completion in Self-Supervised Learning}
\author{%
  Marina Munkhoeva\thanks{Correspondence to
    \texttt{marina.munkhoeva@tuebingen.mpg.de}}
  \thanks{
    Max Planck Institute for Intelligent Systems, T\"ubingen, Germany}
    \\
  \And
  Ivan Oseledets
  \thanks{
  Artificial Intelligence Research Institute (AIRI), Skolkovo Institute of Science and Technology (Skoltech), Moscow, Russian Federation}\\
}
\begin{document}

\maketitle

\begin{abstract}

Self-supervised methods received tremendous attention thanks to their seemingly heuristic approach to learning representations that respect the semantics of the data without any apparent supervision in the form of labels. A growing body of literature is already being published in an attempt to build a coherent and theoretically grounded understanding of the workings of a zoo of losses used in modern self-supervised representation learning methods. 
In this paper, we attempt to provide an understanding from the perspective of a Laplace operator and connect the inductive bias stemming from the augmentation process to a low-rank matrix completion problem.
To this end, we leverage the results from low-rank matrix completion to provide theoretical analysis on the convergence of modern SSL methods and a key property that affects their downstream performance.
\end{abstract}
\section{Introduction}
Self-supervised methods have garnered significant interest due to their heuristic approach to learning representations that capture the semantic information of data without requiring explicit supervision in the form of labels. 
Contrastive learning based methods among the former would use the repulsion among arbitrary pair of points in the batch, while non-contrastive would rely on the consistency among different views of the same image.
While self-supervised representation learning becomes more ubiquitous in the wild, especially in the important domain such as medical imaging~\cite{fischer2023self}, the theoretical grounding of these methods would potentially help avoid the pitfalls in applications. Unsurprisingly, researchers are already trying to build theoretically sound  understanding of modern self-supervised representation learning methods.

The overall goal of this work is to understand self-supervised representation learning through the lens of nonlinear dimensionality reduction methods (e.g. Laplacian Eigenmaps~\cite{belkin2003laplacian}) and low-rank matrix completion problem~\cite{recht2010guaranteed}.
\begin{wrapfigure}[17]{R}{0.38\linewidth}
\begin{center}
\vspace{-0.5em}
\begin{tikzpicture}
\begin{groupplot}[group style={
                      group name=myplot,
                      group size= 1 by 1,
                      },
                      height=0.4\textwidth,
                      width=0.4\textwidth,
                      every axis x label/.style={at={(axis description cs:0.5,-0.1)},
                      		anchor=north},
                      yticklabel=\empty,
]
\nextgroupplot[
 	title = {},
	ylabel=fraction of observed entries,
	xlabel=number of epochs,
	ymode=log,
]
\addplot[color=dodgerblue,draw opacity=1] table[x=epoch,y=p] {figures/data/p-opt.tex};

\addplot[color=cadmiumgreen,dashed,draw opacity=1] table[x=epoch,y=q] {figures/data/p-opt.tex};

%
\legend{theoretical bound $p^*$, actual value ${p =\nicefrac{1}{N}}$}
\end{groupplot}
\end{tikzpicture}
\end{center}
\caption{Illustrative graph of the theoretical fraction $p^*$ of the observed entries required for matrix completion to succeed with high probability. As training proceeds, unmaterialized kernel matrix size $n$ increases and $p^*$, which is roughly ${\sim \log n / n}$, decreases. Eventually, the actual (constant) fraction $p$ of observed entries under self-supervised learning augmentation protocol intersects the theoretical bound.\label{fig:bound}}
\end{wrapfigure}
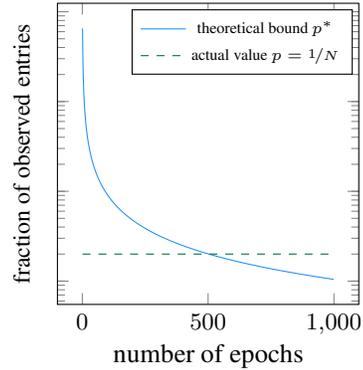
To this end, we take on a Laplace operator perspective on learning the optimal representations in the manifold assumption.
We then derive a trace maximization formulation to learn eigenfunctions of the Laplace operator of the underlying data manifold.
We adopt the heat kernel based embedding map that in theory under certain conditions is an almost isometric embedding of the low-dimensional manifold into the Euclidean space.
As a result, we discuss how existing several SSL methods (e.g. \simclr~\cite{chen2020simple}, \barlow~\cite{balestriero2022contrastive}, \vicreg~\cite{bardes2021vicreg}) can be comprehended under this view.

It is important to note that our current understanding of the topic lacks one crucial aspect. Traditional spectral methods commonly operate with complete kernel matrices, whereas our approach deals with incomplete and potentially noisy ones. 
The only available similarity information among examples is derived from the data augmentation process, which generates a positive pair. 
Meanwhile the remaining examples in the batch are either seen as negative ones (contrastive) or are not considered at all (non-contrastive).

A pertinent and often overlooked question emerges: how can self-supervised learning methods effectively leverage such limited signals to converge towards meaningful representations?
In response, we shed light on this matter by establishing a connection between SSL and a matrix completion problem. We demonstrate that these optimization problems are Lagrangian dual of each other,
implying that optimizing the SSL objective simultaneously entails reconstructing the kernel matrix.
We can summarize the contributions of this paper as follows:
\begin{itemize}[leftmargin=1em]
	\item We propose an eigen-problem objective for spectral embeddings from graphs induced by augmentations and use it to interpret modern SSL methods. 
	\item We show that SSL methods do \emph{Laplacian-based nonlinear dimensionality reduction} and \emph{low-rank matrix completion} simultaneously. We leverage theory behind matrix completion problem to provide insights on the success of SSL methods and their use in practice.
	\item While the number of observed entries required by theory decreases with epochs, we find that the actual number is a constant and the former eventually intersects the latter.
	\item We find a possible explanation for   disparity in downstream performance of the backbone and projection outputs.
\end{itemize}

\section{Background}

This work relies on the manifold hypothesis, a hypothesis that many naturally occurring high-dimensional data lie along a low-dimensional latent manifold inside the high-dimensional space.
Since we can only observe a sample from the manifold in the ambient space, neither the true manifold nor its metric are available to us. 
Although we never explicitly work with Laplace-Beltrami operator, we still give a brief definition below to provide some grounding for the reasoning later.

\paragraph{Laplace operator}
Let $\M$ be a Riemannian manifold and $g$ be the Riemannian metric on $\M$. For any smooth function $u$ on $\M$, the gradient $\nabla u$ is a vector field on $\M$.
Let $\nu$ be the Riemannian volume on $\M$, ${d \nu = \sqrt{ \det g } \, d x^1 ... d x^D}$.
By the divergence theorem, for any smooth functions $u$ and $v$ (plus smooth and compact support assumptions)
$ \int_\M u \, \text{div} \nabla v \, d \nu = - \int_\M \langle \nabla u, \nabla v \rangle d \nu, $ where $\langle \cdot, \cdot \rangle = g(\cdot, \cdot)$.
The operator $\Delta = \text{div} \nabla$ is called the Laplace-Beltrami operator of the Riemannian manifold $\M$.

In practice, we usually work with finite samples, and Laplace operator is typically approximated with graphs or meshes. While the latter are typically used in computational mathematics, the former find a widespread use in machine learning. Below is a brief overview of the relevant graph notions, for a detailed exposition please see~\cite{von2007tutorial}.

\paragraph{Graph Laplacian and Spectral Embedding}
Given a graph ${\Gamma=(V, E)}$ with ${|V| = n}$ vertices and a set of edges ${e_{ij} = (v_i, v_j)}$ that form a weighted adjacency matrix ${\mA_{ij} = w_{ij}}$ with non-negative weight $w_{ij}\geq 0$, whenever there is ${e_{ij} \in E}$, otherwise $0$.
With a degree matrix ${\mD=\text{diag}(\mA\bm{1})}$, the graph Laplacian is given by ${\mL=\mD-\mA}$, the corresponding random walk Laplacian is a normalization ${\Lrw=\mI-\mD^{-1}\mA}$.
All graph Laplacians admit an eigenvalue decomposition, i.e. ${\mL = \mU \Lambda \mU^\top}$, where ${\mU \in \sR^{n\times n}}$ contains eigenvectors in columns and a diagonal matrix ${\Lambda \in \sR^{n \times n}}$ has eigenvalues on the diagonal.
Note that there is a trivial eigenpair ($0, \bm{1}$).
\cite{belkin2008towards,belkin2006convergence} 
show that the eigenvectors of the graph Laplacian of a point-cloud dataset converge to the eigenfunctions of the Laplace-Beltrami operator under uniform sampling assumption.

Whenever one has an affinity matrix, a positive semidefinite pairwise similarity relations among points, the classical way to obtain embeddings for the points is to perform spectral embedding. A typical algorithm would include constructing a graph based on the affinity matrix, computing $k$ first eigenvectors of its Laplacian and setting the embeddings to be the rows of the matrix $\mU \in \sR^{n \times k}$ that contains the eigenvectors as columns.

\paragraph{Matrix completion}
Let $\mM \in \sR^{n\times n}$ is partially observed matrix with rank $r$. Under Bernoulli model, each entry $\mM_{ij}$ is observed independently of all others with probability $p$. Let $\Omega$ be the set of observed indices. The matrix completion problem aims to recover $\mM$ from $m = |\Omega|$ observations. The standard way to solve this problem is via nuclear norm minimization:
\begin{equation}\label{eq:olrmc}
	\min\limits_{\mX\in\sR^{n \times n}} \| \mX \|_* \quad
	\text{subject to}\quad \mX_{ij} = \mM_{ij} \;\text{for}\; (i,j) \in \Omega,
\end{equation}
where $\| \mX\|_*$ is the nuclear norm of $\mX$, i.e. the sum of its singular values. A large body of work~\cite{candes2013simple, recht2010guaranteed, recht2011simpler, chen2015incoherence} has succeeded in providing and enhancing of the conditions that guarantee the optimal solution $\mX^*$ to be both unique and equal to $\mM$ with high probability.

\paragraph{Notation}
Any matrix ${\mX \in \sR^{n_1 \times n_2}}$ has a singular value decomposition (SVD) ${\mX = \mU \Sigma \mV^\top}$, where the columns of the matrix ${\mU \in \sR^{n_1 \times n_1}}$ are left singular vectors, the singular values ${\sigma_1 \geq \sigma_2 \geq \dots \geq \sigma_{\min(n_1, n_2)}}$ lie on the diagonal of a diagonal matrix ${\Sigma \in \sR^{n_1 \times n_2}}$, and right singular vectors are the columns of ${\mV \in \sR^{n_2 \times n_2}}$. ${||\mX||_F,\; ||\mX|| = \sigma_1,\; ||\mX||_* = \sum_i \sigma_i}$ denote Frobenius, spectral and nuclear norms of $\mX$, respectively. $||\vx||_p$ denotes $p$-th vector norm.

\section{SSL and Spectral Embedding}

In this section, we will provide a construction that covers most of the self-supervised learning methods and gives SSL a novel interpretation. First, we formalize the setup from the perspective of the manifold hypothesis. Then, we make some modelling choices to  yield an SSL formulation as a trace maximization problem, a form of eigenvalue problem. Finally, we describe how three well known representatives SSL methods fall under this formulation.

Let $\mX \in \sR^{n \times d'}$ be a set of $n$ points in $\sR^{d'}$ sampled from a low-dimensional data manifold observed in a $d'$-dimensional ambient Euclidean space.
However, data is rarely given in the space where each dimension is meaningful, in other words $d' \gg d^*$, where $d^*$ is unknown true dimensionality of the manifold.
The goal of nonlinear dimensionality reduction is to find a useful embedding map into $d$-dimensional Euclidean space with $d \ll d'$.
Two of the classical approaches, namely Eigenmaps~\cite{belkin2003laplacian} and Diffusion maps~\cite{coifman2005geometric}, use eigenfunctions of the graph Laplacian, an approximation of the Laplace operator associated with the data manifold.
Both can be loosely described as a variant of map 
\begin{equation}
	\Phi(\vx) =  [\;\phi_1(\vx) \quad \phi_2(\vx) \quad \dots \quad \phi_d(\vx) \;],
\end{equation}
where $\phi_k$ is $k$-th eigenfunction of the negative Laplace operator on the underlying manifold.

This type of map bears a direct connection to theoretical question how to find an embedding of certain types of manifolds into Euclidean spaces, studied in differential geometry.
For instance, \cite{berard1994embedding} construct embedding for a given smooth manifold by using its heat kernel. This approach has been subsequently enhanced through the application of a truncated heat kernel expansion. Furthering this trajectory, \cite{portegies2016embeddings} explore whether and when such embedding is close to being isometric, which is desirable as isometric embedding preserves distances.

Motivated by the heat kernel embedding literature, we provide a general construction for self-supervised methods in what follows.
The heat kernel on a manifold can be represented as an expansion $H(p, q, t) = \sum_{k=0}^{\infty} e^{- \lambda_k t} \phi_k(p) \phi_k(q)$, where $p$ and $q$ are some points on the manifold, $t>0$ and $\phi_k$ are normalized eigenfunctions of the Laplace operator, i.e. $-\Delta \phi_k = \lambda_k \phi_k$ and $|\phi_k|_2 = 1$.
However, working with finite data, we can only operate with a graph approximation of the manifold, and will make use of the heat kernel construction for graphs.
The latter is given by a matrix exponential $\mH_t = e^{-t\mL}$, alternatively represented through an expansion $\mH_t = \sum_{i=0}^{n} e^{-\lambda_k t} \phi_k \phi_k^T$, where $\mL$ is a graph Laplacian and $(\lambda_k, \phi_k)$ is $k$-th eigenpair of $\mL$.

Consider Laplacian EigenMaps method, it first constructs a graph and weighs the edges with  a heat kernel of the Euclidean space which takes the form of a Gaussian kernel ${h_E(\vx,\vy, t) = \exp(-||\vx-\vy||_2^2 / t)}$ whenever the distance is less then some hyperparameter, i.e. ${||\vx-\vy||_2 < \epsilon}$, or $\vy$ is among $k$ nearest neighbours of $\vx$.
Next, the method proceeds with finding the eigenvectors of the graph Laplacian constructed from the thus weighted adjacency matrix $\mW$.

Contrarily, the proposed construction acknowledges the cardinality and complexity typically associated with the modern day datasets, where it is challenging to select the cutoff distance or the number of neighbours, and impractical to compute all pairwise distances.
Our ideal graph has edges reflecting the semantic similarity between points; e.g.
there is an edge whenever the pair of points belong to the same class.
The drawback is that this matrix is unknown.
This is where augmentation comes into play as it provides a peek into only a fraction of the entries in the unmaterialized heat kernel matrix.
Consequently, we claim that SSL implicitly solves a low-rank matrix completion problem by instantiating some of the pairwise similarity entries in $\mH_t$ via the augmentation process.
However, prior to this, we need to demonstrate the generality of the perspective we have formulated. To accomplish this, we articulate a trace maximization problem.

\subsection{Trace Minimization}
To start, we construct an unmaterialized heat kernel matrix in accordance with the standard self-supervised learning protocol for augmentations.
Given a dataset with $N$ points, one training epoch generates $a$ views of each point.
Typically, $a=2$ and training runs for $n_{epochs}$ number of epochs.
The whole training generates exactly $n_{epochs} \times a$ views per original instance.
As a result the total number of processed examples is $n = N \times a \times n_{epochs}$, thereby we have that $\mH_t \in \sR^{n \times n}$.

As a rule, SSL utilises architectures with a heavy inductive bias, e.g. ResNet~\cite{he2015deep} / Vision Transformers~\cite{dosovitskiy2021image} for image data.
Thus, we may safely assume that the same instance views are close in the embedding space and are connected.
This results in a block diagonal adjacency matrix $\mW$, where $i$-th block of ones accounts for all the views of $i$-th image among $N$ original images.
The fraction of observed entries equal to ${p = {N\times (n_{epochs} \times a)^2 / n^2} = 1 / N}$, a constant fraction for a given dataset.

Note that in the ideal scenario, we would know the cluster/class affiliation for each point and would have connected same-class views with edges. However, in reality, we only know instance affiliation of each view. Let us denote the ideal scenario heat kernel matrix as $\mH_t$ and its partially observed counterpart used in reality --- $\widehat{\mH}_t$.

To obtain the heat kernel matrix, we use the normalized random walk Laplacian $\Lrw$, for which the heat kernel matrix is as before:
$	\mH_t = \exp(-t\Lrw) = \sum_{k=0}^{n} \exp(-\lambda_k t) \vu_k \vu_k^\top,$
where $\vu_k$ is $k$-th eigenvector and $\lambda_k$ a corresponding eigenvalue of $\Lrw$.
To obtain spectral embeddings from an ideal $\mH_t$, one would then proceed with solving a trace minimization form of eigenvalue problem in~\eqref{eq:evp-tr}, where the optimal solution is exactly the eigenvectors of $\Lrw$, i.e. ${\mZ^* = \mU}$.
\noindent\begin{minipage}{.3\linewidth}
\begin{align}\begin{split}
\label{eq:evp-tr}
	\max\limits_\mZ \quad \tr(\mZ^\top \mH_t \mZ) \\
	 \text{s.t.} \quad \mZ^\top \mZ = \mI,
\end{split}\end{align}\end{minipage}%
\begin{minipage}{.7\linewidth}
\begin{align}\begin{split}
\label{eq:objective}
	\max\limits_\theta &\quad \tr(\mZ_\theta^\top \widehat{\mH}_t^{} \mZ^{}_\theta) \\
	 \text{s.t.} &\quad \mZ^\top_\theta \mZ^{}_\theta = \mI_d,\; \mZ_\theta^\top \boldsymbol{1} = \boldsymbol{0}, 
\end{split}\end{align}\end{minipage}

However, for lack of a better alternative we resort to the incomplete $\widehat{\mH}_t$ and need to learn a parameterized map 
$\F_\theta (\mX) = \mZ_\theta$ in~\eqref{eq:objective}.
Apart from the specified eigenfunction pairwise orthogonality constraint, the resulting loss is comprised of implicit constraints on the trivial eigenfunction (identity) and function norm.
But before we treat~\eqref{eq:evp-tr} with incomplete $\mH_t$ as matrix completion problem, we show that this formulation can be seen as a generalization for a number of existing self-supervised learning methods. 
Specifically, several modelling choices differentiate the resultant methods.

\subsection{Self-Supervised Methods Learn Eigenfunctions of the Laplace Operator}


\paragraph{\simclr}
Consider the following contrastive loss, variants of which are widely adopted in self-supervised learning methods: ${L_{CL} = \sum\limits_{i,j \in + pairs} l_{ij}+l_{ji}}$, where
\begin{align}
\label{eq:CL}
	l_{ij} = \log \frac{e^{\kappa\langle \vz_i, \vz_j \rangle}} { \sum\limits_{k \neq i} e^{\kappa\langle \vz_i,\vz_k \rangle}}
	= \kappa\langle \vz_i, \vz_j \rangle - \log \sum\limits_{k \neq i} e^{\kappa\langle \vz_i,\vz_k \rangle},
\end{align}
with $\zi = f_\theta (\vx_i)$ is an embedding of input $\vx_i$ given by function $f$ parameterized by a neural network with learnable parameters $\theta$, which produces unit-norm embeddings, i.e. ${||\vz_i||_2 = 1}$.
One iteration of \simclr\, takes a batch of $N$ data points and creates $2N$ views, augmenting each sample to obtain a positive pair ($\vz_i, \vz_j$), while treating all the other samples and their augmentations as contrastive samples $\vz_k$.

Let us include ${k=i}$ in the sum in denominator of \eqref{eq:CL} for a moment.
A goal of \simclr\, is to obtain optimal representations $\mZ \in \sR^{2N \times d}$ such that for each positive pair $(i,j)$, their representations will be aligned as much as possible $\langle \vz_i, \vz_j \rangle \rightarrow 1$. Let ${\mA_{ij} = \exp(\kappa[\mZ\mZ^\top]_{ij})}$ and $\mD = \text{diag}(\mA \mathbf{1})$, then we can rewrite \eqref{eq:CL} as
\begin{align}
\begin{split}\label{eq:matrixform}
	l_{ij} = \log \mA_{ij} - \log \sum_{k=1}^{2N} \mA_{ik} =  \log \mA_{ij} - \log \mD_{ii} 
		   = \log \frac{\mA_{ij}} {\mD_{ii}} = \log [\mD^{-1}\mA]_{ij}
\end{split}
\end{align}
where we are interested in the right hand side $\log [\mD^{-1}\mA]_{ij}$.
Let us note that $[\mD^{-1}\mA]_{ij}$ from \eqref{eq:matrixform} is a normalized adjacency of a graph $G = (V, E)$, where node set $V$ is the set of views.
Since representations live on a unit ${(d-1)}$-sphere, we have a choice of naturally arising distributions (von Mises-Fisher, spherical normal, etc) to instantiate the weighting function $\mu(\vx_i, \vx_j)$. The model choice of \simclr\, is to instantiate the weighting function with the density of the von Mises-Fisher distribution with $\kappa>0$ (without the partition function):
\begin{align}
	\mA_{ij} = \exp(\kappa \vz_i^\top \vz_j),
\end{align}
where $\mA_{ij}$ equals $e^{\kappa}$, with a typical value $\kappa=2$, whenever $i,j$ is a positive pair, and $e^0=1$ otherwise.
Note that ${\log (\mD^{-1}\mA) = \mD^{-1}\mA - \mI + o((\mD^{-1}\mA)^2) \approx - \Lrw}$, thus in loose terms the objective in~\eqref{eq:CL} maybe be seen as a minimization of a trace of a negative of the graph Laplacian by learning $\vz$'s that shape $\mA$.

Decoupled Contrastive Learning~\cite{yeh2021decoupled} can be seen as reweighting the adjacency matrix by setting $\mA_{ij} = w(z_i,z_j)$, where $w(z_i, z_j)$ is a reweighting function to emphasize hard examples.

For non-contrastive methods (\barlow, \vicreg, etc) SSL methods (where no negative examples are used, e.g. \barlow, \simsiam), the respective losses could also be interpreted as learning a diffusion map. The key distinction with contrastive methods expresses itself in setting $A_{ij} = 0$ for view pairs $(i,j)$ that do not have shared original, thus eliminating any signal from a possibly negative pair.

\paragraph{\barlow} method normalizes the columns of representation matrices ${\mZ_a, \mZ_b \in \sR^{N\times d}}$, which is the same as the function norm constraint in \eqref{eq:objective}. The overall objective is as follows:
\begin{equation}
	J = \sum_{ii}(\mZ_a^\top \mZ_b^{} - \mI)^2_{ii} + \alpha \sum_{i\neq j} (\mZ_a^\top \mZ_b^{} - \mI)^2_{ij},
\end{equation}
and simultaneously aims at three things: (i) enforce the closeness of the rows, i.e. positive pair embeddings, (ii) restrict the norm of the columns, i.e. the eigenfunctions, (iii) orthogonalize columns, again the eigenfunctions. The kernel matrix choice in \barlow\, is a simple bi-diagonal adjacency matrix $\mA_{ij} = 1$ as long as $(i,j)$ is a positive pair, and 0 otherwise.

\paragraph{\vicreg} objective is a weighted sum of \emph{variance, covariance} and \emph{invariance} terms:
\begin{align*}
	J_{var} = \sum_{k=1}^d \max(0, 1-\sqrt{[\mZ^\top \mZ]_{kk}} ),\quad
	J_{cov} = \sum_{k\neq l} [\mZ^\top \mZ]^2_{kl},\quad
	J_{inv} = \sum_{ij} \mA_{ij} \|\mZ_i - \mZ_j\|^2,
\end{align*}
a similar formulation to the previous method, however, the choice for the adjacency matrix here is little bit different. Individual terms of this loss have separate coefficients. The choice of these hyperparameters defines the implicit adjacency matrix entries, controls the maximum function norm allowed and the trade-off between orthogonality and the trace terms in~\eqref{eq:objective}.

\section{SSL and Low-Rank Matrix Completion}
In this section, we use low-rank matrix completion theory to show that the limited information from augmentations might be quite enough under certain conditions.
First, we establish the connection between our self-supervised learning objective \eqref{eq:objective} and the low-rank matrix completion problem. We then draw practical insights from the existing convergence theory for matrix completion problem. 

\subsection{Low-Rank Matrix Completion Dual}
\label{ssec:lrmc}
We argue that the objective in~\eqref{eq:evp-tr} with a substitute incomplete kernel matrix $\widehat{\mH}_t$ implicitly contains an objective for the low-rank matrix completion problem. 
To show this, we first introduce the general form of the nuclear norm minimization problem with affine subspace constraint in \eqref{eq:primal}.
\noindent\begin{minipage}{.5\linewidth}
\begin{align}\begin{split}
	\min\limits_\mX&\quad \|\mX\|_\ast \\
	\text{subject to}&\quad \mathcal{A}(\mX) = \vb,
\label{eq:primal}
\end{split}\end{align}\end{minipage}%
\begin{minipage}{.5\linewidth}
\begin{align}\begin{split}
	\max\limits_\vq&\quad \vb^\top \vq \\
	\text{subject to}&\quad \| \mathcal{A}^\ast(\vq)\| \leq 1,
\label{eq:dual}
\end{split}\end{align}\end{minipage}

The subspace is given by linear equations ${\mathcal{A}(\mX) = \vb}$ and linear operator ${\mathcal{A}: \sR^{n\times n} \rightarrow \sR^{p}}$ can be a random sampling operator.
The dual for the nuclear norm $| \cdot|_*$ is the operator norm $|\cdot |$.
The problem in \eqref{eq:primal} has been initially introduced as a heuristic method for seeking minimum rank solutions to linear matrix equations, but has later been shown to have theoretical guarantees under certain conditions on the linear operator $\mathcal{A}$~\cite{recht2010guaranteed,recht2011simpler,tasissa2018exact,gross2011recovering}.
The Lagrangian dual of \eqref{eq:primal} is given by~\eqref{eq:dual},
where operator ${\mathcal{A}^\ast}: \sR^p \rightarrow \sR^{n\times n}$ is the adjoint of ${\mathcal{A}}$. 

We write down an instance of~\eqref{eq:primal} in~\eqref{eq:lrmc} below to better reflect the specifics of our setting.
Since the true underlying heat kernel matrix ${\mH_t}$ is observed only \emph{partially} with known entries indicated by a collection of index pairs $\Omega$ induced by augmentation process, we can form a sampling symmetric matrix $\mW$: ${\mW_{ij} = 1}$ if ${(i,j) \in \Omega}$, and $0$ otherwise, indicating observed entries.
Now, the incomplete kernel matrix can be written down explicitly as ${\widehat{\mH} = \mW {\odot} \mH}$, where $\odot$ denotes Hadamard (element-wise) product.
The constraint in \eqref{eq:primal} instantiates as ${\mathcal{A}(\mX) = \vect(\mW{\odot}\mX)}$.
\noindent\begin{minipage}{.5\linewidth}
\begin{align}\begin{split}
\label{eq:lrmc}
	\min\limits_{\mX \in \cS_+}&\quad \|\mX\|_\ast \\
	\text{subject to}&\quad \vect(\mW{\odot}\mX) = \vect(\widehat{\mH})
\end{split}\end{align}\end{minipage}%
\begin{minipage}{.5\linewidth}
\begin{align}\begin{split}
	\max\limits_{\mZ \in \sR^{n \times d}}&\quad \tr \mZ^\top \widehat{\mH} \mZ \\
	\text{subject to}&\quad \mZ^\top\mZ = \mI
\label{eq:hobj}
\end{split}\end{align}\end{minipage}

We proceed by showing that maximisation of the trace formulation in~\eqref{eq:hobj} embraces reconstruction of the incomplete underlying kernel matrix with entries known to us only from the augmentation process and specified by the matrix $\mW$.

\begin{proposition}
The trace maximization problem given in \eqref{eq:hobj} is a Lagrangian dual of low-rank matrix completion problem in \eqref{eq:lrmc}.
\end{proposition}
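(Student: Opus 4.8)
The plan is to put \eqref{eq:lrmc} into the generic primal form \eqref{eq:primal}, read off its Lagrangian dual from \eqref{eq:dual}, and then recognise \eqref{eq:hobj} inside that dual after an explicit reparametrisation of the dual variable. For the first part, I match the affine constraint $\vect(\mW{\odot}\mX)=\vect(\widehat{\mH})$ to $\mathcal{A}(\mX)=\vb$ by taking $\mathcal{A}(\mX)=\vect(\mW{\odot}\mX)$ and $\vb=\vect(\widehat{\mH})$, keep the objective $\|\mX\|_\ast$, and record the side constraint $\mX\in\cS_+$. Since $\|\mX\|_\ast=\tr\mX$ on the PSD cone, this last constraint is precisely what turns the two-sided operator-norm ball in \eqref{eq:dual} into a one-sided semidefinite constraint on the multiplier; modulo that, \eqref{eq:dual} yields the dual $\max_\vq \vb^\top\vq$ subject to $\|\mathcal{A}^\ast(\vq)\|\le 1$, the operator norm appearing because it is the norm dual to the nuclear norm.

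Next I compute the adjoint and reparametrise. From $\langle\mathcal{A}(\mX),\vq\rangle=\langle\mW{\odot}\mX,\mat(\vq)\rangle=\langle\mX,\mW{\odot}\mat(\vq)\rangle$ I read off $\mathcal{A}^\ast(\vq)=\mW{\odot}\mat(\vq)$. Writing $\mQ=\mat(\vq)$, the dual objective is $\vb^\top\vq=\langle\widehat{\mH},\mQ\rangle$, and since $\widehat{\mH}$ is supported on $\Omega$ it equals $\langle\widehat{\mH},\mW{\odot}\mQ\rangle$, so only $\mW{\odot}\mQ$ enters. Substituting $\mQ=\mZ\mZ^\top$ with $\mZ\in\sR^{n\times d}$ and $\mZ^\top\mZ=\mI$ turns the objective into $\langle\widehat{\mH},\mZ\mZ^\top\rangle=\tr(\mZ^\top\widehat{\mH}\mZ)$, exactly the objective of \eqref{eq:hobj}, while the constraint $\mZ^\top\mZ=\mI$ says $\mZ\mZ^\top$ is a rank-$d$ orthogonal projection, so $\|\mZ\mZ^\top\|=1$. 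Because $\mW$ is the block-of-ones mask produced by the augmentation protocol, $\mW{\odot}\mZ\mZ^\top$ is block-diagonal with blocks equal to principal submatrices of $\mZ\mZ^\top$, hence $\|\mW{\odot}\mZ\mZ^\top\|\le\|\mZ\mZ^\top\|=1$ (equivalently $\mW{\odot}\mZ\mZ^\top\preceq\mI$); dual feasibility is therefore automatic, and \eqref{eq:hobj} is exactly the dual \eqref{eq:dual} with the matrix variable $\mat(\vq)$ restricted to rank-$d$ projections.

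The step I expect to be the main obstacle is making $\mat(\vq)\mapsto\mZ\mZ^\top$ rigorous, i.e. showing that restricting the dual variable from an arbitrary symmetric $\mQ$ with $\mW{\odot}\mQ\preceq\mI$ to the Stiefel parametrisation loses nothing in the regime of interest. I would handle this by combining strong duality for the semidefinite program underlying \eqref{eq:lrmc} --- valid because a strictly feasible PSD completion of the observed blocks exists, so Slater's condition holds --- with the Ky Fan variational principle, which identifies $\max_{\mZ^\top\mZ=\mI}\tr(\mZ^\top\widehat{\mH}\mZ)$ with the sum of the $d$ largest eigenvalues of $\widehat{\mH}$, i.e. with the support function of the Fantope $\{\mP:0\preceq\mP\preceq\mI,\ \tr\mP=d\}$; the linear dual objective is then maximised at an extreme point, and the low-rank prior (the rank of $\mH$ is at most $d$) forces that extreme point to be a rank-$d$ projection. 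I would also be explicit about the status of the identification: without the dimension budget it is the safe statement that \eqref{eq:hobj} is a tight lower bound on \eqref{eq:lrmc} which coincides with it exactly when low-rank completion succeeds, and it is the block structure of $\mW$ together with the budget $d$ that makes the bound tight --- which is the sense in which \eqref{eq:hobj} ``is a Lagrangian dual'' of \eqref{eq:lrmc}.
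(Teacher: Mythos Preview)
Your proof follows the same skeleton as the paper's: identify $\mathcal{A}(\mX)=\vect(\mW{\odot}\mX)$ and $\vb=\vect(\widehat{\mH})$, compute the adjoint, substitute $\vq=\vect(\mZ\mZ^\top)$, and verify dual feasibility $\|\mW{\odot}\mZ\mZ^\top\|\le 1$. The one substantive difference is in the feasibility step. The paper invokes a singular-value inequality for Hadamard products, $\sum_{i=1}^k\sigma_i(\mA{\odot}\mB)\le \sum_{i=1}^k\min(c_i(\mA),r_i(\mA))\,\sigma_i(\mB)$, which with $\mA=\mW$ yields only $\sigma_1(\mW{\odot}\mZ\mZ^\top)\le\sqrt{K}$ (with $K$ the block size) and then absorbs the constant by rescaling $\mW$. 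Your argument via the block structure of $\mW$ and Cauchy interlacing --- principal submatrices of the projection $\mZ\mZ^\top$ have operator norm at most $1$ --- is more elementary and delivers the bound $\le 1$ directly, with no rescaling needed. You are also more careful than the paper about the passage from the generic dual variable $\vq$ to the rank-$d$ parametrisation $\mZ\mZ^\top$: the paper simply makes that substitution without comment, whereas your Slater/Ky-Fan/Fantope discussion targets exactly the point the paper leaves implicit.
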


\begin{proof}
First, we show that \eqref{eq:hobj} is an instance of \eqref{eq:dual}.
	Let linear operator ${\mathcal{A}(\mX): \sR^{n\times n} \rightarrow \sR^{n^2}}$ be a sampling vectorization of  
	$\vect(\mW{\odot}\mX)$.
	By trace of a matrix product, we can rewrite the objective as ${\tr [\mZ^\top \widehat{\mH} \mZ] = \tr [\widehat{\mH} \mZ \mZ^\top] = (\vect \widehat{\mH})^\top \vect(\mZ\mZ^\top) = \vb^\top \vq}$. 
	
	The adjoint operator ${\mathcal{A}^\ast (\vq)} = \mat(\mD_\mW \vq)$ acts a sampling matricization, i.e. it samples and maps $\vq$ back to $\sR^{n\times n}$: 
	${\mathcal{A}^\ast (\vq) = \mW{\odot}\mZ\mZ^\top}$, and the constraint of the dual \eqref{eq:dual} becomes ${\| \mW{\odot}\mZ\mZ^\top \| \leq 1}$.
	As $\mZ$ admits singular value decomposition ${\mZ = \mU \Sigma \mV^\top}$, the constraint in \eqref{eq:hobj} ${\mZ^\top\mZ = \mI}$ implies ${\sigma_i(\mZ) = 1}$ for all $i$.
	
	To establish equivalence of the constraints in \eqref{eq:dual} and \eqref{eq:hobj}, i.e. ${\| \mW{\odot}\mZ\mZ^\top  \| = \sigma_1 (\mW{\odot}\mZ\mZ^\top) \leq 1}$ given $\mW$ and $\sigma_1(\mZ) = 1$, we can use a result for singular values of Hadamard product~\cite{}, specifically, for ${k=1, 2, \dots, n}$:
	${\sum_{i=1}^k \sigma_i(\mA \odot \mB) \leq \sum_{i=1}^k \min(c_i(\mA), r_i(\mA))\sigma_i(\mB)}$,
	where $c_i(\mA)$ and $r_i(\mA)$ are column and row lengths of $\mA$, and ${\sigma_1 \geq \sigma_2 \geq \dots \geq \sigma_n}$.
	Let ${\mA = \mW}$ and ${\mB = \mZ\mZ^\top}$, then ${r_i = c_i = \sqrt{K}}$, yielding ${\sigma_1(\mW{\odot}\mZ\mZ^\top) \leq \sqrt{K}}$, where $K = n_{epochs} \times a$ is a constant, consequently, it can be accounted for by rescaling $\mW$ and not affecting the maximization.
	Finally, since \eqref{eq:lrmc} is an instance of \eqref{eq:primal}, we can conclude that \eqref{eq:hobj} is dual to \eqref{eq:lrmc}.
\end{proof}

This result allows us to make use of theoretical guarantees obtained in  matrix completion in what follows.

\subsection{Convergence}
\label{ssec:convergence}
\paragraph{Incoherence}
Standard incoherence~\cite{candes2012exact,keshavan2010matrix,jain2013low,gross2011recovering,recht2011simpler} is the key notion in the analysis of matrix completion problem. Intuitively, incoherence characterises the ability to extract information from a small random subsample of columns in the matrix. More formally, it is defined as an extent to which the singular vectors are aligned with the standard basis.
\begin{definition}
Given matrix ${\mM \in \sR^{n_1 \times n_2}}$ with rank $r$ and SVD ${\mM = \mU\Sigma\mV^\top}$, $\mM$ is said to satisfy the \emph{standard incoherence} condition with coherence parameter $\mu$ if
	\begin{equation}
	\label{eq:incoherence}
	\max\limits_{1\leq i\leq n_1} \| \mU^\top \ve_i\|_2 \leq \sqrt{\frac{\mu r}{n_1}}\;,
	\quad \max\limits_{1\leq j\leq n_2} \| \mV^\top \ve_j\|_2 \leq \sqrt{\frac{\mu r}{n_2}}\;,	
	\end{equation}
where $\ve_i$ is the $i$-th standard basis vector of a respective dimension. 
\end{definition}

Since the matrix we aim to recover is symmetric, ${n_1=n_2=n}$ and its left, right singular and eigen- vectors are identical. 
The coherence parameter ${\mu = \frac{n}{r} \max\limits_i \| \mU\ve_i \|_2^2}$ range from $1$~(incoherent) to $\frac{n}{r}$~(coherent).

To the best of our knowledge, optimal sample complexity bounds for matrix recovery via nuclear norm minimisation were obtained in~\cite{chen2015incoherence}.
Specifically, given $\mM$ satisfies standard incoherence condition~\eqref{eq:incoherence} with parameter $\mu$, if uniform sampling probability $p^* \geq \nicefrac{c_0 \mu r \log^2 (n)}{n} $ for some $c_0 > 0$, then $\mM$ is the unique solution to~\eqref{eq:olrmc} with high probability.

Matrix recovery is easy when it is highly incoherent --- when information is spread more uniformly among its columns/rows, loosing a random subset of its entries is not as big of a deal as when information is concentrated in certain important columns/rows. On the other hand, high incoherence intuitively makes matrices harder when used as feature matrices in downstream tasks. This might explain why typical SSL methods rely on the output of backbone network (\emph{representations}) rather than the output of projection head (\emph{embeddings}).

It is easy to see that downstream performance depends on the alignment of the target matrix $\mY$ with the left eigenvectors of the feature matrix $\mX$.
The target matrix$\mY$ in downstream classification task is typically a very simple binary matrix, and can be shown to have low incoherence.
However, whenever $\mX$ is obtained as the projection head output of a network learned via self-supervised learning method with a spectral embedding type objective, the incoherence of $\mX$ is inherently tied to the incoherence of the kernel matrix.
The latter needs to have high incoherence to be recoverable.
Consequently, we put forward the following proposition and find its empirical support in Section~\ref{sec:exp-coh}.
\begin{proposition}
\label{prop:coh}
	Projection head outputs (embeddings) yield lower performance on the downstream task due to their high incoherence. The complexity of the projection head correlates with coherence of the backbone output (representations).
\end{proposition}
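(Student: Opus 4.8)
The plan is to make the two informal claims precise and then chain the duality of the preceding proposition together with the incoherence threshold of \cite{chen2015incoherence}. First I would fix an operational notion of downstream performance: linear probing. For a feature matrix $\mX\in\sR^{n\times d}$ and a one-hot target matrix $\mY\in\sR^{n\times c}$, the smallest attainable squared risk is $\min_{\mW}\|\mX\mW-\mY\|_F^2=\|\mY\|_F^2-\|\mU_\mX^\top\mY\|_F^2$, where $\mU_\mX$ is an orthonormal basis of $\mathrm{col}(\mX)$. Hence the relevant quantity is the alignment energy $\|\mU_\mX^\top\mY\|_F^2$, and probing succeeds precisely when the leading directions of the features put their mass on the label pattern. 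I would record as a hypothesis that $\mY$ (balanced classes, block-constant columns) has low incoherence, so that there exist rank-$d$ feature matrices whose alignment energy is close to $\|\mY\|_F^2$.

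Next I would show the projection output inherits the kernel's incoherence. By the duality proposition, the optimum $\mZ^\star$ of \eqref{eq:objective}/\eqref{eq:hobj} has columns spanning the top-$d$ eigenspace of $\widehat{\mH}$; when the augmentation sampling fraction $p=1/N$ exceeds the completion threshold $p^\star\gtrsim \mu_0 r\log^2 n/n$, the recovered matrix equals the ideal $\mH_t$, so $\mathrm{col}(\mZ^\star)$ coincides with the leading eigenspace of $\Lrw$. Feasibility of completion therefore forces $\mH_t$, and hence $\mZ^\star$, to satisfy \eqref{eq:incoherence} with the small $\mu_0$ permitted by the threshold: $\max_i\|\mZ^{\star\top}\ve_i\|_2\le\sqrt{\mu_0 d/n}$, i.e. the rows of the embedding are nearly equal in norm and maximally spread over the coordinate directions. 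Combining this with the first step, each row having norm $\le\sqrt{\mu_0 d/n}$ caps how much any column of $\mZ^\star$ can concentrate on a label block $S$: $\|\boldsymbol{1}_S^\top\mZ^\star\|_2$ is controlled by $\sqrt{|S|\,\mu_0 d/n}$ up to the spectral norm of $\mZ^\star$, which bounds $\|\mU_{\mZ^\star}^\top\mY\|_F^2$ from above by a quantity that decays as $\mu_0$ approaches its maximal value. So the embeddings are provably weaker probes than rank-$d$ features allowed to align with the low-incoherence target --- this is the first sentence of the proposition.

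For the second sentence I would write the network as $\mZ^\star=g_\phi(\mZ^{\mathrm{bb}})$ with $g_\phi$ the projection head. The orthonormality and implied incoherence constraints in \eqref{eq:objective} are imposed only on $\mZ^\star$, never on $\mZ^{\mathrm{bb}}$; thus $\mZ^{\mathrm{bb}}$ is free to keep the cluster-aligned, class-block structure that maximizes the alignment energy above, while $g_\phi$ must supply the change of basis and whitening that turns that block structure into the near-isometric, maximally incoherent $\mZ^\star$. I would quantify ``must supply'' by lower-bounding a \emph{complexity} measure of $g_\phi$ --- its Lipschitz constant, or the depth/width needed to approximate a rotation-plus-whitening of a rank-$r$ block matrix onto an incoherent frame --- by a monotone function of the incoherence gap $\mu_0(\mZ^{\mathrm{bb}})^{-1}-\mu_0(\mZ^\star)^{-1}$. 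The contrapositive gives the claim: a head too simple to realize this map forces either completion to fail or $\mZ^{\mathrm{bb}}$ itself toward the incoherent regime, whereas a more complex head buys a more coherent, more probe-friendly backbone.

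The main obstacle is this last step: turning ``complexity of the projection head'' into a rigorous quantity and proving a genuine lower bound coupling it to the incoherence gap --- essentially an approximation-theoretic statement that mapping a low-incoherence rank-$r$ factor onto a high-incoherence one spanning the same column space requires a transformation of controlled complexity. The argument behind the first sentence is also delicate, since the structure of $\mY$ and the meaning of ``performance'' must be pinned down (balanced, block-constant targets; linear-probe risk) precisely enough that the incoherence bound yields a non-vacuous performance gap rather than a trivial one; I would state these as explicit assumptions of the proposition.
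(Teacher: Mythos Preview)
The paper does not prove this proposition. Immediately before stating it, the authors write ``we put forward the following proposition and find its empirical support in Section~\ref{sec:exp-coh}''; the only justification offered is the heuristic paragraph about downstream alignment with $\mY$ and the experiments in Figures~\ref{fig:perf-coh} and~\ref{fig:perf-coh-vicreg}. Your proposal is therefore attempting something the paper never claims to do, and the appropriate comparison is not proof-to-proof but proof-attempt-to-empirical-claim.

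That said, your formalization has a genuine logical gap in the central step. You write that ``feasibility of completion therefore forces $\mH_t$, and hence $\mZ^\star$, to satisfy \eqref{eq:incoherence} with the small $\mu_0$ permitted by the threshold.'' The implication in~\cite{chen2015incoherence} runs the other way: \emph{if} $\mu_0$ is small \emph{then} $p^\star\gtrsim \mu_0 r\log^2 n/n$ samples suffice. Nothing in that theorem lets you conclude that successful recovery (or the mere fact that SSL training converges) forces $\mu_0$ to be small. Indeed, the paper's experiments report the opposite sign: embeddings have \emph{larger} $\mu$ than representations (Figure~\ref{fig:perf-coh} caption: ``Embeddings \ldots\ have lower coherence (larger $\mu$)''). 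So your chain ``duality $\Rightarrow$ $\mZ^\star$ spans top eigenspace of $\mH_t$ $\Rightarrow$ small $\mu_0$ $\Rightarrow$ bounded row norms $\Rightarrow$ capped alignment with $\mY$'' breaks at the second arrow, and the predicted direction of the incoherence of embeddings is inconsistent with what the paper actually measures.

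Your plan for the second sentence --- lower-bounding a Lipschitz/depth complexity of $g_\phi$ by the incoherence gap --- is an interesting programme, but you correctly flag it as the main obstacle, and the paper makes no attempt at anything like it; its ``evidence'' is the monotone trend in Figure~\ref{fig:perf-coh} (left). If you want to keep a rigorous statement, you would need to (i) fix the direction-of-implication issue above, likely by replacing the completion threshold argument with a direct computation of $\mu_0(\mZ^\star)$ from the block structure of the ideal $\mH_t$, and (ii) state the balanced-label and linear-probe assumptions explicitly, as you already note.
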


\paragraph{Only a fraction of total entries in $\mH_t$ is required for matrix recovery with high probability.}
We can further narrow down the bound on $p^*$ if we consider structured matrix completion problem in the form of some side information which has a direct connection to self-supervised setup.
Let the column/row space of $\mM$ lie in some known $r'$-dimensional subspace of $\sR^n$ spanned by the columns of $\bar \mU$, $n > r' \geq r$.
Then the nuclear norm minimization problem transforms into:
\begin{equation}
\label{eq:smc}
	\min\limits_{\mX}\quad \|\mX\|_\ast \quad \text{subject to}\quad (\bar \mU \mX \bar \mU^\top)_{ij} = \mM_{ij}, \quad (i,j)\in \Omega.
\end{equation}
In practice, we use neural network parameterisation to learn the heat kernel map, this choice inadvertently restricts the reconstructed kernel to be aligned with the column space of the network outputs, bringing the inductive bias of the architecture into picture.

The optimal sampling complexity bound for~\eqref{eq:smc} extends as ${p^* \gtrsim \nicefrac{\mu \bar \mu r \bar r \log(\bar \mu \bar r) \log n}{n^2}}$, where $\bar \mu$ and $\bar r $ are the coherence and the rank of $\bar \mU $, respectively.
Suppose we wanted to recover some binary adjacency matrix $\mA$, such that ${\mA_{ij} = 1}$ if $i,j$ belong to the same cluster, 0 otherwise. Because $\mA$ can be rearranged to be block-diagonal with $r$ blocks, its coherence ${\mu(\mA) = \nicefrac{n}{rn_{min}}}$, and exact reconstruction is possible provided  
\begin{equation}\label{eq:bound}
{p^* \gtrsim \nicefrac{\bar\mu \bar r \log(\bar\mu \bar r) \log n}{n n_{min}}}, 	
\end{equation}
where $n_{min}$ is the minimal cluster size. Heat kernel matrix constructed from such $\mA$ will have its eigenspectrum closely resembling that of $\mA$, albeit smooth, yet still having same pattern in eigengaps. So we may safely adopt this bound for $\mH_t$. For balanced class datasets ${n_{min}=\nicefrac{n}{c}}$, and we can immediately see that the number of required observations ${m=p^*n^2=c \bar\mu \bar r \log(\bar\mu \bar r) \log n}$ grows linearly with the number of classes $c$ in the dataset. 

For illustrative purposes we plot the theoretical bound $p^*$ on the fraction of the observed entries for a successful matrix completion from~\eqref{eq:bound} in Figure~\ref{fig:bound} along with the actual fraction $p$ of observed entries under self-supervised learning augmentation protocol to demonstrate that the latter intercepts the former given enough training epochs. To be specific, we set the size of the training dataset $N=50$k (CIFAR-10 size), the cluster size (number of same class examples) $n_{min}=5000$ ($c=10$), the number of views $a=2$, the number of epochs  $n_{epochs}$ range from $1$ to $1000$, and $r=512$ (embedding size), assume $\mu=20$ (which seems to be a fair estimate in light of the experiments in Section~\ref{sec:exp}) and $c_0 = 5$, a constant used to control the probability of exact matrix recovery.

Based on this bound, we highlight the following factors that play important role in the success of any SSL method with spectral embedding type objective. The choice of the similarity function affects the incoherence parameter in the bound. The number of samples per class (alternatively the minimal cluster size $n_{min}$) should also be high enough for $p^*$ to decrease rapidly. Finally, though potentially in contrast to the empirical observations (higher $d$ on ImageNet yields higher downstream performance), the rank $r$, effectively the dimension of embedding $d$, should not be too large.

\section{Experiments}
\label{sec:exp}

First, we verify that the performance of the proposed  formulation in~\eqref{eq:objective} and the corresponding loss function, denoted \rqmin\,, is at least on par with the state-of-the-art methods. We then study the effect of the complexity of the projection head on the incoherence and its connection with the downstream performance of the backbone against projection head outputs.

Here we report the training hyperparameters for all of the experiments. As VICReg is extremely sensitive to the choice of hyperparameters (e.g. increasing learning rate with increased batch size negatively affects training -- learning diverges), we adopt the same hyperparameters for training \rqmin\; for a fair comparison.
We follow the standard VICReg protocol adopted and finetuned for CIFAR-10/100 and ImageNet-100 in the library for self-supervised learning methods for visual representation learning \href{https://github.com/vturrisi/solo-learn/tree/main}{\textit{solo-learn}}~\cite{JMLR:v23:21-1155}.

We train ResNet-18 backbone architecture with 3-layer MLP projection head (respective hidden dimensions: 2048-2048-2048). The batch size is 256 for CIFAR datasets and 512 for ImageNet-100. For pretraining, the learning rate schedule is linear warm-up for 10 epochs and cosine annealing, the optimizer is LARS with learning rate 0.3. For linear probe training, SGD with step learning rate schedule with steps at 60 and 80 epochs. The number of pre-training epochs is 1000 for CIFAR and 400 for ImageNet-100, downstream training -- 100 epochs.

\subsection{Comparable performance}
\label{sec:exp-perf}
To demonstrate that our trace maximization objective is on par with existing SSL methods, we test our objective in \eqref{eq:objective} on a standard ResNet-18 backbone neural network with 3-layer MLP projection head and obtain comparable results on CIFAR-10, CIFAR-100, and ImageNet-100 to state-of-art methods. As most of the latter yield almost identical performance given enough hyperparameter tuning, we pick \vicreg~as a representative to compare against. Following the standard protocol with 1000 and 400 pre-training epochs for both CIFAR datasets and for ImageNet-100, respectively,  linear probe for downstream evaluation is trained for 100 epochs. We do not tune hyperparameters and use default values.  Mean and standard deviation for downstream accuracy across 5-10 trials with different seed values are reported in Table~\ref{tab:perf-small-scale}.

\begin{table}[t]
\caption{Performance comparison for the trace maximization formulation (\rqmin) and \vicreg.
Mean and standard deviation for validation set accuracy across 5-10 runs for CIFAR-10, CIFAR-100 and ImageNet-100.}
\label{tab:perf-small-scale}
\centering
\begin{tabular} {lcccccc}
\toprule
  	& \multicolumn{2}{c}{CIFAR-10} & \multicolumn{2}{c}{CIFAR-100} & \multicolumn{2}{c}{ImageNet-100} \\
  	
\cmidrule(r){2-3} \cmidrule(r){4-5} \cmidrule(r){6-7} 
  	& top-1 & top-5 & top-1 & top-5 & top-1 & top-5 \\ 

\midrule
 
\vicreg & 91.15 {\tiny$\pm$ 0.16} & 99.64 {\tiny$\pm$ 0.05} 
 		& 67.57 {\tiny$\pm$ 0.20} & 89.90 {\tiny$\pm$ 0.13} 
 		& 78.89 {\tiny$\pm$ 0.38} & 93.94 {\tiny$\pm$ 0.17}\\
 		
\rqmin	& 91.19 {\tiny$\pm$ 0.13} & 99.67 {\tiny$\pm$ 0.04} 
 		& 68.12 {\tiny$\pm$ 0.26} & 90.12 {\tiny$\pm$ 0.11}
 		& 78.98 {\tiny$\pm$ 0.33} & 94.45 {\tiny$\pm$ 0.23}\\
\bottomrule 
\end{tabular}
\end{table}

\subsection{Incoherence effect on downstream performance}
\label{sec:exp-coh}
\def\mystrut{\vphantom{hg}}

\pgfplotsset{
    legend image with text/.style={
        legend image code/.code={%
            \node[anchor=center] at (0.3cm,0cm) {#1};
        }
    },
}

\begin{figure}[ht]
\centering
\begin{tikzpicture}
\begin{groupplot}[group style={
                      group name = myplot,
                      group size = 2 by 1,
                      horizontal sep = 3cm,
                      vertical sep = 0cm
                      },
                      height=0.33\linewidth,
                      width=0.45\linewidth,
                      title style={at={(0.5,0.9)}, anchor=south},
                      every axis x label/.style={at={(axis description cs:0.5,-0.15)},anchor=north},]
\nextgroupplot[
 	title = {incoherence vs projection layers},
	ylabel=$\mu$,
	xlabel=\# layers,
]
\addplot[mark=square, only marks, color=royalazure,fill opacity=0.4,draw opacity=1, error bars/.cd, y dir=both, y explicit] 
table[x=len, y=mu, y error=std] {figures/data/mu-len.tex};

\nextgroupplot[
 	title = {projection head vs backbone},
	ylabel=$\mu$,
	xlabel=accuracy,
	legend columns=2,
            legend style={
                font=\mystrut,
                legend cell align=left,
            },
            legend pos=north west,
]
        \addlegendimage{legend image with text=\tiny{emb}}
        \addlegendentry{}
        \addlegendimage{legend image with text=\tiny{rep}}
        \addlegendentry{}
        
\addplot[mark=square*, mark size=2.5pt, only marks,
		point meta=explicit symbolic,
		color=junglegreen,
            fill opacity=0.5, draw opacity=1]
            table[y=mu,x=acc] {figures/data/rq-emb-mu-acc.tex};

\addlegendentry{}

\addplot[mark=square, mark size=2.5pt, only marks,
		point meta=explicit symbolic,
		color=royalazure,
            fill opacity=0.5, draw opacity=1]
            table[y=mu,x=acc] {figures/data/rq-rep-mu-acc.tex};

\addlegendentry{\tiny{\rqmin}}

\addplot[mark=diamond*, mark size=2.5pt, only marks,
		point meta=explicit symbolic,
		color=jade,
            fill opacity=0.5, draw opacity=1]
            table[y=mu,x=acc] {figures/data/vr-emb-mu-acc.tex};

\addlegendentry{}

\addplot[mark=diamond, mark size=2.5pt, only marks,
		point meta=explicit symbolic,
		color=royalblue,
            fill opacity=0.5, draw opacity=1]
            table[y=mu,x=acc] {figures/data/vr-rep-mu-acc.tex};
\addlegendentry{\tiny{\vicreg}}
    
\end{groupplot}
\end{tikzpicture}
\caption{
\textbf{(left)} Coherence of the backbone outputs (\emph{representations}) grows with the increasing number of layers in the projection head.
\textbf{(right)} Downstream accuracy versus incoherence. {\color{jade}Green} marks show downstream performance and incoherence of embeddings, {\color{royalazure}blue} marks -- representations. Embeddings and representations of the same type of model are almost always separable in these coordinates: embeddings exhibit lower performance and lower coherence (small $\mu$) while representations perform better and have higher coherence (large $\mu$).
}
\label{fig:perf-coh}
\end{figure}
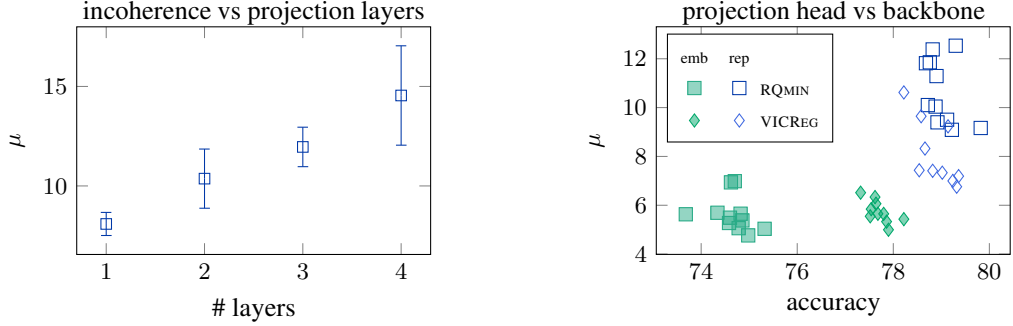

To test whether incoherence could explain the performance disparity between the outputs of the backbone, termed \emph{representations}, and the projection head, known as \emph{embeddings}, we train several models
with various configurations of projection heads 
against SSL objectives on the ImageNet-100 dataset
and calculate incoherences of representations and embeddings along with respective downstream accuracies.
The performance of the representations and embeddings of the pre-trained model is evaluated in a downstream classification task, while incoherence $\mu$ for both candidates is estimated on the training set.

Since the dimensionality of representations and embeddings differ, i.e. 512 for backbone output and 2048 for projection head output, we need to pick the rank to compute $\mu$ accordingly.
The common way to pick rank in the numerical methods is through tolerance, i.e. thresholding based on the values of the normalized singular values, i.e. $\sigma'_i = \sigma_i / \sum_j \sigma_j$. Otherwise, one could use the notion of effective rank~\cite{roy2007effective}, given by 
$r_e(\mA) = \exp(H(\boldsymbol{\sigma}))$, where the entropy 
$H(\boldsymbol{\sigma}) = - \sum_i \boldsymbol{\sigma}_i \log(\boldsymbol{\sigma}_i)$ with vector of singular values $\boldsymbol{\sigma}$. Overall, we compute coherence as the following expression:
\begin{equation*}
	\mu(\mA) = \frac{n}{r_e(\mA)}\max\limits_{1\leq i\leq \lceil r_e\rceil} \| \mU_{\mA}^\top \ve_i\|_2^2.
\end{equation*}

\paragraph{Incoherence and projection head complexity.}
To estimate coherence of representations in Figure~\ref{fig:perf-coh} (left), we embed the training set of ImageNet-100 to get representations matrix ${\mA \in \mathbb{R}^{125952 \times 512}}$ and compute incoherence $\mu(\mA)$ using effective rank ${r_e(\mA)}$. We embed the training set using three distinct pre-trained models for each of the projection head configurations characterized by the number of layers $l$, ${l \in \{1,2,3,4\}}$, and average the values across ten embedding runs.
The resulting mean and standard deviation plot suggests that incoherence (low $\mu$) is higher for more shallow projection heads and decreases ($\mu$ grows) as the number of layers in the head increase, a result we anticipated.

\paragraph{Embeddings and representations disparity.}
Figure~\ref{fig:perf-coh} (right) plots distinct models representations and embeddings in the Accuracy-Coherence plane. It encodes different type of loss with a shape: diamonds for \vicreg\; and squares for \rqmin. 
While blue colour signifies the position of each model's representations, the green colour reflects the corresponding embeddings. Both objectives demonstrate a \emph{separation} of the embedding (low coherence, low accuracy) and representation (high coherence, high accuracy) points, which is more distinctive in the case of \rqmin~(squares).

The resulting plots support our hypothesis in Proposition~\ref{prop:coh} that incoherence indeed plays a crucial role in explaining the use of the backbone outputs. 
For successful matrix completion, high incoherence of the partially observed affinity matrix is essential. However, for the downstream performance of the representations, the opposite is preferred. The projection head functions as a disentangling buffer, enabling the representations to maintain low incoherence. Conversely, the embeddings inherit the incoherence of the affinity matrix.

While incoherence seems to be an attractive candidate for  unsupervised embedding evaluation metric in scenarios with little to no test data, one should use caution as the coherence value $\mu$ does not reflect the amount of relevant information stored in the matrix of embeddings due to normalization with reference to its rank. This idea has been explored in~\cite{tsitsulin2023unsupervised} on embeddings of supervised models.

\section{Related Work}

Recent success of self-supervised methods~\cite{chen2020simple,ermolov2021whitening, bardes2021vicreg, yeh2021decoupled, zbontar2021barlow}, especially in the domain of computer vision received great amount of attention since the learned representations ended up respecting the semantics of the data. There has been a great interest in trying to understand the inner workings of the seemingly heuristic objectives since.

While there are many lenses one may take up to study the problem~\cite{jing2021understanding, wang2022chaos, johnson2022contrastive, tian2021understanding, tian2022deep,haochen2021provable}, a particularly related to this work concurrent body of literature has adopted the view of the kernel or laplacian-based spectral representation learning~\cite{deng2022neural, balestriero2022contrastive}, which we also share in this work. 
We highlight our main difference to the results provided in these very recent papers.~\cite{balestriero2022contrastive}  does a brilliant job connecting and characterizing modern SSL methods into classical existing counterparts. However, it does not answer an important question whether an incomplete a priori knowledge about the data manifold stemming from augmentations can provide a good approximation to essentially nonlinear dimensionality reduction methods such as LLE~\cite{roweis2000nonlinear}, MDS~\cite{kruskal1964multidimensional}, and kernel PCA~\cite{scholkopf1998nonlinear}.

We not only show SSL methods to have an objective function similar to the objectives of classical spectral decomposition methods, e.g. LaplacianEigenmaps~\cite{belkin2003laplacian}, but also try to address the problem of incomplete and noisy measurements that we get as an inductive bias during the augmentation process. We hope that this perspective via the low-matrix completion problem will yield further theoretical results on the success of self-supervised learning
 and practical benefits when applying these methods in the wild, e.g. in domains such as medical imaging~\cite{fischer2023self} and hard sciences~\cite{pfau2018spectral}.

\section{Conclusion and Future Work}

In this work, we make an attempt to bridge modern self-supervised methods with classical Laplacian-based dimensionality reduction methods and low-rank matrix completion in hopes to provide theoretical insights on the recent successes of SSL methods. 

We show that these methods are not only doing Laplacian-based nonlinear reduction but are able to approximate and recover the truncated version of the underlying Laplace operator given only noisy and incomplete information from augmentation protocol by adopting low-rank matrix completion extensive literature and results.
However, when working with datasets with potentially large number of classes, it might be a good idea to consider whether the size of the sample is large enough so that the minimal cluster size allows the full data matrix to be considered low-rank, otherwise the SSL methods would possibly fail to converge.

We also spot a direct influence of the inductive bias in the parameterization of the learned map on the column space of the recovered matrix.
We also hypothesize that the disparity in downstream performance between backbone and projection head outputs can be explained by the high incoherence if the latter which is tied to the incoherence of the kernel one is recovering during training. The kernel should have high incoherence to be recoverable.

One of the possible avenues for future work stems from the notion of incoherence. We see it in exploring incoherence property of different types of the similarity or weighting functions one may use to instantiate the adjacency matrix with.

We hope that this work paves the way for a deeper study of the connection between self-supervised methods and classical problems such as matrix completion to yield better practical and theoretical understanding of various applications of SSL in different domains, not only computer vision.

\bibliographystyle{plain}
\bibliography{main}

\end{document}